\newtheorem{lemma}{Lemma}
\newtheorem{proposition}{Proposition}
\newtheorem{theorem}{Theorem}
\newcommand{\extrasmall}{\fontsize{8pt}{9pt}\selectfont}
\title{Bibliography management: \texttt{natbib} package}
\title{Bayesian Metaplasticity from Synaptic Uncertainty}
\author{
    \extrasmall{\textbf{Djohan Bonnet\textsuperscript{1a,2}}}, 
    \extrasmall{\textbf{Tifenn Hirtzlin\textsuperscript{1a}}}, 
    \extrasmall{\textbf{Tarcisius Januel\textsuperscript{1a}}}, 
    \extrasmall{\textbf{Thomas Dalgaty\textsuperscript{1b}}}, 
    \extrasmall{\textbf{Damien Querlioz\textsuperscript{2}}}, 
    \extrasmall{\textbf{Elisa Vianello\textsuperscript{1a}}}
}
\affil{
    \extrasmall \textsuperscript{1}Université Grenoble Alpes, CEA, LETI\textsuperscript{ a}, LIST\textsuperscript{ b}, Grenoble, France\\
    \textsuperscript{2}Université Paris-Saclay, CNRS, C2N, Palaiseau, France
    \\ djohan.bonnet@cea.fr, damien.querlioz@c2n.upsaclay,fr, elisa.vianello@cea.fr
}
\begin{document}
\maketitle




\begin{abstract} 
Catastrophic forgetting remains a challenge for neural networks, especially in lifelong learning scenarios. In this study, we introduce MEtaplasticity from Synaptic Uncertainty (MESU), inspired by metaplasticity and Bayesian inference principles. MESU harnesses synaptic uncertainty to retain information over time, with its update rule closely approximating the diagonal Newton's method for synaptic updates. Through continual learning experiments on permuted MNIST tasks, we demonstrate MESU's remarkable capability to maintain learning performance across 100 tasks without the need of explicit task boundaries.
\end{abstract}

\section{Introduction}

Artificial Neural Networks (ANNs) have revolutionized various domains, but they suffer from catastrophic forgetting \citep{goodfellow2013empirical} when trained on multiple tasks sequentially, limiting their applicability in lifelong learning scenarios. Many previous approaches to address this issue have focused on weight consolidation techniques \citep{kirkpatrick2017overcoming,zenke2017continual,aljundi2018memory}. These methods identify important weights for previously learned tasks, typically using an approximation of the model's Hessian matrix. Simply, the Hessian diagonal component of each weight is used to protect it from further updates.
While effective, these methods normally require clear task boundaries, which diminishes their real-world utility. In an embedded learning context, a neural network will continuously encounter new training data in an online fashion with no means of determining boundaries between tasks.

The human brain avoids catastrophic forgetting without task boundaries through mechanisms still not fully understood, but believed to involve synaptic metaplasticity \citep{fusi2005cascade}, a form of synaptic consolidation. Synapses adapt their learning rate continuously,  not at the end of tasks. In ANNs, this type of metaplasticity has been shown empirically to reduce catastrophic forgetting \citep{laborieux2021synaptic,d2023synaptic}, but a theoretical understanding of these results is lacking. In parallel, recent work has also suggested that synaptic plasticity can be interpreted as Bayesian inference, where the synapse incorporates not just the synaptic weight but also a measure of its uncertainty \citep{aitchison2021synaptic}. This raises intriguing questions: Could the concept of weight uncertainty value in this work be analogous to synaptic local learning rate in metaplasticity? If so, could a Bayesian approach be the key to enabling lifelong learning without catastrophic forgetting, mimicking the capabilities of the human brain? 

In this paper, we propose a novel synaptic update rule based on Hessian-modulated learning and inspired by the principles of metaplasticity and Bayesian inference. We apply it to Bayesian neural networks \citep{blundell2015weight}, which are ANNs incorporating uncertainty about the weights, reminiscent of the biological model of \citep{aitchison2021synaptic}.  Our experiments demonstrate that despite its fully spatially and temporally local nature, our metaplasticity rule matches the computation of the Hessian matrix. We also show experimentally its high performance in mitigating catastrophic forgetting, without the need for explicit task boundaries.

An existing method also leverages Bayesian neural networks for continual learning without task boundaries \citep{zeno2018task}, but it exhibits a marked deterioration in its ability to learn new tasks as more and more tasks are encountered. Our method, which shines new light on this work, in contrast maintains learning ability perpetually, achieving an operation that more closely resembles the capabilities of the human brain.

 Our main contributions are as follows: 
\begin{itemize}
    \item  The synaptic uncertainty in Bayesian Neural Networks can serve as a metaplasticity parameter, enabling continual learning without task boundaries, and which keeps on learning after a high number of tasks, aligning with the biological insights of \cite{aitchison2021synaptic}.
    \item Theoretically and experimentally, we demonstrate that the associated update rule allows for a precise approximation of the Hessian diagonal, providing synaptic updates equivalent to the diagonal approximation of Newton's method.
    \item Our algorithm matches the state-of-the-art of learning permuted MNIST tasks without knowing task boundaries, while maintaining the capacity to learn even after encountering 100 tasks. 
\end{itemize}

\section{Theoretical Results}\label{sec3}

In Bayesian neural networks \citep{blundell2015weight,gal2016uncertainty,abdar2021review}, weights are typically represented as probability distributions defined by a mean-field Gaussian, $q(\omega|\theta)$, where $\theta$ encompasses both the mean ($\mu$) and standard deviation ($\sigma)$ values. Weight samples are expressed as $\omega=\mu+\epsilon \cdot \sigma$, with $\epsilon \sim \mathcal{N}(0, 1)$. 

During the training phase, each weight is learned using the backpropagation algorithm. To calculate the gradients of the expectation of the negative log-likelihood, we employ the Bayes by Backprop method. Consequently, the cost function is defined as $\mathcal{C}=\mathbb{E}_{q(\omega|\theta)}[\mathcal{L}(\omega)]$, where  $\mathcal{L}$ is the negative log-likelihood, and  the gradients are computed as

\begin{equation}\label{eq:Grads}
\frac{\partial \mathcal{C}}{\partial \mu} = \mathbb{E}_{\epsilon}\left[\frac{\partial \mathcal{L}(\omega)}{\partial \omega}\right]
\hspace{1.5cm}
 \frac{\partial \mathcal{C}}{\partial \sigma} = \mathbb{E}_{\epsilon}\left[\frac{\partial \mathcal{L}(\omega)}{\partial \omega} \times \epsilon\right]. 
\end{equation}

Using a learning rate $\gamma$,  $\sigma$ and $\mu$ at the update iteration $n+1$ are given by

\begin{equation}\label{eq:BBB}
\mu_{n+1}=\mu_n-\gamma \times \frac{\partial \mathcal{C}}{\partial \mu_n}
\hspace{1.5cm}
\sigma_{n+1}=\sigma_n-\gamma \times \frac{\partial \mathcal{C}}{\partial \sigma_n}.
\end{equation}

Like standard ANNs trained by the backpropagation algorithm, Bayesian neural networks trained by Bayes by Backprop are prone to catastrophic forgetting. 
To address the challenge of continual learning in Bayesian neural networks, our goal is to construct a metaplasticity function based on the continual computation of the Hessian diagonal of the negative log-likelihood. This Hessian diagonal serves to quantify the ``importance'' associated with each synapse. 
Synapses of greater importance exhibit a large contribution to the Hessian diagonal, requiring a reduction of their local learning rate.  However, while the utility of the Hessian is clear, it is costly to evaluate, and its continual evaluation is usually not feasible. Following the biology-inspired principle of metaplasticity \citep{fusi2005cascade,laborieux2021synaptic,jedlicka2022contributions}, in our approach, the Hessian diagonal is computed naturally using simple equations.

To understand our approach, we first highlight a connection between synaptic standard deviation ($\sigma$) and the Hessian diagonal.
We demonstrate Lemma 1 in Appendix A, which shows that when a component of the Hessian matrix has a high positive value, it results in a significant gradient with respect to the corresponding $\sigma$, leading to a decrease in $\sigma$ according to Eq.~(\ref{eq:BBB}). This observation suggests that synapses with greater importance for the previously learned tasks tend to have smaller $\sigma$ values.

\begin{lemma}[Hessian diagonal via First-Order Derivative in Bayesian Neural Networks]

Consider a mean-field Gaussian, $q(\omega|\theta)$, describing a Bayesian neural network, where $\theta=(\mu,\sigma)$, and weight samples are defined as $\omega=\mu+\epsilon \cdot \sigma$, with $\epsilon \sim \mathcal{N}(0, 1)$. Let $\mathcal{L}$ be the negative log-likelihood and $\mathcal{C}$ the expectation of $\mathcal{L}$, defined as $\mathcal{C}=\mathbb{E}_{q(\omega|\theta)}[\mathcal{L}(\omega)]$. The diagonal of the Hessian matrix of the expectation with respect to $\mu$ is given by: $H_D(\mu)=\frac{1}{\sigma} \frac{\partial \mathcal{C}}{\partial \sigma}$.
\end{lemma}

We now propose to modify the updates on both $\sigma$ and $\mu$ (Eq.~(\ref{eq:BBB}))  to achieve an update rule that we will show to be equivalent to the diagonal approximation of Newton's method \citep{becker1988improving}.

\begin{equation}\label{eq:Mesu}
\boxed{\mu_{n+1}=\mu_n-\sigma_n^2 \times \frac{\partial \mathcal{C}}{\partial \mu_n}
\hspace{1.5cm}
\sigma_{n+1}=\sigma_n-\sigma_n^2 \times \frac{\partial \mathcal{C}}{\partial \sigma_n}+ \frac{\sigma_n(\sigma_{prior}^2 -\sigma_n^2)}{\sigma_{res}^2}}.
\end{equation}

where $\sigma^2_n$ is the metaplasticity function, $\sigma_{prior}$ and $\sigma_{res}$ are two positive hyperparameters, with ${\sigma_{prior}^2} \ll {\sigma_{res}^2}$. $\sigma_n$ is attracted towards $\sigma_{prior}$, which can therefore be interpreted as a prior belief over $\sigma_n$ \citep{mackay1992practical}. $\sigma_{res}$ regulates the strength of the link between $\sigma_n$ and $\sigma_{prior}$.

We call these update rules ``MEtaplasticity from Synaptic Uncertainty'' (MESU).
When learning a new task $n+1$, MESU preserves the performance of the previously learned task $n$ by constraining the mean-field Gaussian $q(\omega|\theta_{n+1})$ to remain within a region of overlap with $q(\omega|\theta_{n})$, as illustrated in Fig.~\ref{fig:update sigma}. This constraint is implemented through the metaplasticity function $\sigma^2_n$ and aligns with parallel observations from computational neuroscience of \cite{aitchison2021synaptic}.

\begin{figure}[h]
  \centering
  \includegraphics[width=0.95\textwidth]{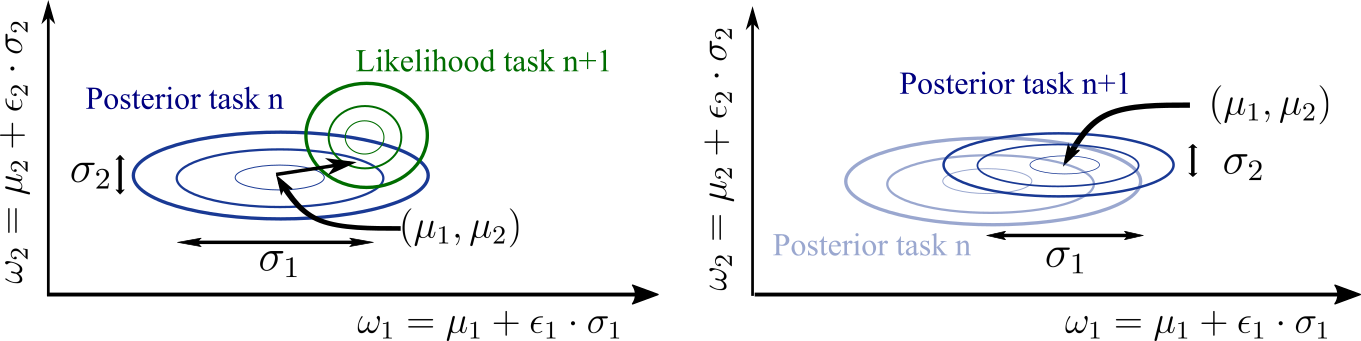}
  \caption{\textbf{Qualitative illustration of the update rule.} Contour plot of the density of the mean-field Gaussians $q(\omega|\theta_{n})$ and $q(\omega|\theta_{n+1})$ for the new task $n+1$ and the previously learned task $n$ (blue). Contour plot of the Likelihood of the incoming task number $n+1$ (green).}
  \label{fig:update sigma}
\end{figure}

\begin{paragraph}{The Locally Constant Curvature}
To justify MESU, we make a necessary assumption that, locally, the curvature of the loss function is constant and positive. This implies that for each sample $\omega$, $\frac{\partial^2 \mathcal{L}(\omega)}{\partial \omega^2}$ is a positive constant, which we call $\frac{1}{\sigma_L^2}$.
This assumption leads to two important consequences. First,  the Hessian diagonal with respect to $\sigma$ is equivalent to the Hessian diagonal with respect to $\mu$:

\begin{equation}\label{eq:Hessian_mu_constante}
H_D(\mu)=\mathbb{E}_{\epsilon}\left[\frac{\partial^2 \mathcal{L}(\omega)}{\partial \omega^2}\right]=\frac{1}{\sigma_L^2}
\end{equation}

\begin{equation}\label{eq:Hessian_sigma_constante}
H_D(\sigma)=\mathbb{E}_{\epsilon}\left[\frac{\partial^2 \mathcal{L}(\omega)}{\partial \omega^2}\right] \times \epsilon^2=\frac{\partial^2 \mathcal{L}(\omega)}{\partial \omega^2} \times \mathbb{E}_{\epsilon}[\epsilon^2]=\frac{1}{\sigma_L^2}.
\end{equation}

Second, the update over $\sigma$ becomes a recurrent sequence that can be analyzed. Using the lemma, Eq.~(\ref{eq:Mesu}), and Eq.~(\ref{eq:Hessian_mu_constante}), we obtain:

\begin{equation}\label{eq:sequence}
\sigma_{n+1}=\sigma_n(1-\frac{\sigma_n^2}{\sigma_L^2})+ \frac{\sigma_n(\sigma_{prior}^2 -\sigma_n^2)}{\sigma_{res}^2}.
\end{equation}

\end{paragraph}

A convergence analysis of  the sequence result leads to our theorem (see Appendix C for  full proof). 

\begin{theorem}\label{thm:MESU}[Hessian Diagonal via Synaptic Uncertainty in Bayesian Neural Networks]
Let $q(\omega|\theta)$ represent the mean-field Gaussian describing the Bayesian neural network, where $\theta=(\mu,\sigma)$, and weight samples are defined as $\omega=\mu+\epsilon \cdot \sigma$, with $\epsilon \sim \mathcal{N}(0, 1)$. Consider $\mathcal{L}$ as the negative log-likelihood and $\mathcal{C}$ as the expectation of $\mathcal{L}$, defined as $\mathcal{C}=\mathbb{E}_{q(\omega|\theta)}[\mathcal{L}(\omega)]$. Let $\sigma_{n+1}=\sigma_n-\sigma_n^2 \times \frac{\partial \mathcal{C}}{\partial \sigma_n}+ \frac{\sigma_n(\sigma_{prior}^2 -\sigma_n^2)}{\sigma_{res}^2}$ be the update rule over $\sigma$. If the loss curvature can be expressed as $\frac{1}{\sigma_L^2}$,  $\sigma_0<\sigma_L$ and, $\frac{\sigma_{prior}^2} {\sigma_{res}^2} \ll 1$, then the diagonal elements of the Hessian matrix with respect to $\mu$ and $\sigma$ are given by:
\begin{equation}\label{eq:HessianResult}
H_D(\mu)=H_D(\sigma)=\lim_{n \to +\infty} \left[\frac{\sigma_{prior}^2 }{\sigma_{res}^2\sigma_n^2}-\frac{1}{\sigma_{res}^2}\right].
\end{equation}
\end{theorem}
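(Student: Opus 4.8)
The plan is to reduce the theorem to a fixed-point analysis of the scalar recurrence in Eq.~(\ref{eq:sequence}) and then read off the limit. First I would substitute Lemma~1 together with the locally-constant-curvature assumption $\frac{\partial^2\mathcal{L}}{\partial\omega^2}=\frac{1}{\sigma_L^2}$ into the $\sigma$-update. Lemma~1 gives $\frac{\partial\mathcal{C}}{\partial\sigma_n}=\sigma_n H_D(\mu)=\sigma_n/\sigma_L^2$, so the update collapses to the autonomous recurrence $\sigma_{n+1}=f(\sigma_n)$ with $f(\sigma)=\sigma\bigl(1-\frac{\sigma^2}{\sigma_L^2}\bigr)+\frac{\sigma(\sigma_{prior}^2-\sigma^2)}{\sigma_{res}^2}$, which is exactly Eq.~(\ref{eq:sequence}).

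Next I would locate the fixed points. Factoring out $\sigma$, the nonzero fixed point $\sigma_\infty$ satisfies $\frac{\sigma_\infty^2}{\sigma_L^2}=\frac{\sigma_{prior}^2-\sigma_\infty^2}{\sigma_{res}^2}$, which solves to the unique positive value $\sigma_\infty^2=\frac{\sigma_L^2\sigma_{prior}^2}{\sigma_{res}^2+\sigma_L^2}$. Dividing this same identity by $\sigma_\infty^2$ and rearranging yields $\frac{1}{\sigma_L^2}=\frac{\sigma_{prior}^2}{\sigma_{res}^2\sigma_\infty^2}-\frac{1}{\sigma_{res}^2}$, which is precisely the right-hand side of Eq.~(\ref{eq:HessianResult}) evaluated at the limit. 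Since $H_D(\mu)=H_D(\sigma)=1/\sigma_L^2$ by Eqs.~(\ref{eq:Hessian_mu_constante})--(\ref{eq:Hessian_sigma_constante}), the theorem reduces entirely to proving $\sigma_n\to\sigma_\infty$.

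For convergence I would first establish local stability by computing $f'(\sigma)=1-\frac{3\sigma^2}{\sigma_L^2}+\frac{\sigma_{prior}^2}{\sigma_{res}^2}-\frac{3\sigma^2}{\sigma_{res}^2}$ and using the fixed-point identity $\frac{3\sigma_\infty^2}{\sigma_L^2}+\frac{3\sigma_\infty^2}{\sigma_{res}^2}=\frac{3\sigma_{prior}^2}{\sigma_{res}^2}$ to get $f'(\sigma_\infty)=1-\frac{2\sigma_{prior}^2}{\sigma_{res}^2}$. The hypothesis $\frac{\sigma_{prior}^2}{\sigma_{res}^2}\ll 1$ then forces $0<f'(\sigma_\infty)<1$, so $\sigma_\infty$ is an attracting fixed point with monotone, non-oscillatory local dynamics. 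To promote this to global convergence from the stated initial condition, I would show that a suitable interval containing $\sigma_\infty$ is forward-invariant under $f$ and that $f$ is monotone there, so the iterates form a bounded monotone sequence trapped between $\sigma_0$ and $\sigma_\infty$; in particular $f(\sigma)<\sigma$ precisely when $\sigma>\sigma_\infty$, which drives the descent. The assumption $\sigma_0<\sigma_L$ keeps the leading factor $1-\sigma^2/\sigma_L^2$ positive and prevents the cubic term from pushing an iterate negative or collapsing it onto the spurious fixed point $\sigma=0$. Passing to the limit in Eq.~(\ref{eq:HessianResult}) and invoking continuity of $\sigma\mapsto\frac{\sigma_{prior}^2}{\sigma_{res}^2\sigma^2}-\frac{1}{\sigma_{res}^2}$ then gives the claimed identity.

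The main obstacle I anticipate is the global convergence step rather than the algebra: the local derivative computation cleanly certifies stability, but rigorously ruling out convergence to $\sigma=0$ and exhibiting a forward-invariant trapping interval requires carefully tracking the competition between the decreasing cubic map $\sigma\bigl(1-\sigma^2/\sigma_L^2\bigr)$ and the restoring metaplasticity term, where the smallness hypothesis $\sigma_{prior}^2\ll\sigma_{res}^2$ must be used quantitatively to guarantee the iterates settle into the basin of $\sigma_\infty$.
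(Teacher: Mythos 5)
Your proposal is correct and follows the same overall strategy as the paper: use Lemma~1 and the constant-curvature hypothesis to collapse the $\sigma$-update to the scalar recurrence of Eq.~(\ref{eq:sequence}), identify its positive fixed point, and prove monotone convergence to it. Two genuine differences are worth noting. First, you stay in the original variables and compute the exact nonzero fixed point $\sigma_\infty^2=\sigma_L^2\sigma_{prior}^2/(\sigma_{res}^2+\sigma_L^2)$, from which the identity $1/\sigma_L^2=\sigma_{prior}^2/(\sigma_{res}^2\sigma_\infty^2)-1/\sigma_{res}^2$ follows with no approximation; the paper instead rescales to $\alpha_n=\sigma_n\sqrt{1/\sigma_L^2+1/\sigma_{res}^2}$ and matches the update to the normalized recurrence $\alpha_{n+1}=\alpha_n(1-\alpha_n^2)/(1-\alpha_{lim}^2)$ only up to an expansion in $\alpha_{lim}^2=\sigma_{prior}^2/\sigma_{res}^2\ll1$, so your route is arguably cleaner for obtaining the limit identity. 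Second, your linearization $f'(\sigma_\infty)=1-2\sigma_{prior}^2/\sigma_{res}^2$ is a correct local-stability certificate that the paper never computes; the paper instead derives the needed monotonicity from the elementary fact that $x\mapsto x(1-x^2)$ is increasing on $[0,1/\sqrt{3}]$. The one step you leave as a sketch --- exhibiting the forward-invariant trapping interval, establishing two-sided monotone boundedness, and ruling out collapse to $\sigma=0$ --- is precisely the content of the paper's Appendix~B, Proposition~1, which carries it out by induction in the two cases $\alpha_0<\alpha_{lim}$ and $\alpha_0>\alpha_{lim}$; completing your plan amounts to reproducing that proposition, and nothing in your outline would obstruct doing so.
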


According to Theorem 1, for sufficiently large $n$,  the metaplasticity function $\sigma^2_n$ can be rewritten as:

\begin{equation}\label{eq:HessianResultMesu}
\boxed{\lim_{n \to +\infty}\sigma^2_n=\frac{\sigma_{prior}^2}{\sigma_{res}^2} \times \frac{1}{H_D(\mu) +\frac{1}{\sigma_{res}^2}}}.
\end{equation}

By substituting Eq.~(\ref{eq:HessianResultMesu}) into Eq.~(\ref{eq:Mesu}), the update on $\mu$ becomes proportional to the inverse of the Hessian diagonal. MESU is therefore equivalent to the diagonal approximation of Newton's method with a learning rate of $\frac{\sigma_{\text{prior}}^2}{\sigma_{\text{res}}^2}$ and a residual of $\frac{1}{\sigma_{res}^2}$. This constitutes the main theoretical results of the paper.
The residual term provides a safeguard against a null Hessian diagonal component.

\begin{paragraph}{Importance of the regularization term}\label{sec5}
The term $\frac{\sigma_n(\sigma_{\text{prior}}^2 - \sigma_n^2)}{\sigma_{\text{res}}^2}$ introduced in the $\sigma_{n+1}$ update in Eq.~(\ref{eq:Mesu}) constitutes the primary difference between MESU and the Bayesian Gradient
Descent (BGD) method of  \cite{zeno2018task}. 
Under the condition  $\frac{\partial \mathcal{C}}{\partial \sigma_n} \times \sigma_n \ll 1$ and with a scaling factor of 0.5 to the $\sigma$ update, Eq.~(\ref{eq:Mesu}) aligns with BGD when this term is omitted. 
This addition is crucial for two reasons. First, it prevents standard deviations from reaching zero, as shown in Eq.~(\ref{eq:fmeta2}). Second, it prevents large changes in the mean value while maintaining a constant standard deviation. Similarly, the synaptic metaplasticity model introduced by \cite{fusi2005cascade} inhibits any synaptic changes in value while maintaining a constant metaplastic value.
To further explore the relationship between MESU and BGD, in Appendix C, Theorem 2, we express the MESU update for $\sigma$ without the regularization term as a recurrent sequence and show that:

\begin{equation}\label{eq:HessianResult2}
H_D(\mu)=H_D(\sigma)=\lim_{n \to +\infty} \frac{1}{2n\sigma_n^2}.
\end{equation}

This result allows us to derive the analog of a metaplasticity function for the BGD method:

\begin{equation}\label{eq:fmeta2}
{\sigma_n^2= \frac{1}{2n H_D(\mu)}
\hspace{1.5cm}
\lim_{n \to +\infty}\frac{1}{2n H_D(\mu)}=0}.
\end{equation}

Consequently, the BGD method is also equivalent to the Newton method, but,  in that case, when the number of iterations $n$ is sufficiently large, the metaplasticity function, and therefore, the update on $\mu$, approach zero. This results in a linear decrease in plasticity with increasing number of iterations. In the following section, we 
realize that this difference between MESU and BGD has important consequences when learning a large number of tasks.

\end{paragraph}

\section{Experimental Results}\label{sec3}

We now evaluate our proposed MESU update on a Bayesian neural network  trained on MNIST and  its suitability for continual learning through the permuted MNIST task \citep{lecun1998mnist,goodfellow2013empirical}. We compare MESU to other approaches which rely on Hessian matrix approximation, and to BGD \citep{zeno2018task}. Appendix D provides full implementation details.

\begin{paragraph}{Validation of theoretical results}
We start by presenting various empirical results on the MNIST dataset. We employ a small fully-connected neural network with a single hidden layer of 100 units trained with MESU (Eq.~(\ref{eq:Mesu})). To quantify the capability of the MESU trained model to approximate the Hessian, we compute the true Hessian diagonal with respect to both $\sigma$ and $\mu$ using the PyTorch autograd function, based on a snapshot of the model synapses after 100,000 updates. First, we investigate our critical assumption of locally constant curvature (Eq.(\ref{eq:Hessian_mu_constante})), which predicted that the Hessian of the mean and standard deviation would be equivalent. In Fig.~\ref{fig:hessian_approx}a, we plot, as a point per synapse, the true Hessian diagonal with respect to $\sigma$ and $\mu$ where we observe that they are indeed equivalent.  Fig.~\ref{fig:hessian_approx}b shows that the Hessian diagonal with respect to $\mu$ is accurately approximated by Eq.~(\ref{eq:HessianResult}). This result substantiates Theorem 1, the most important theoretical result of our work and shows that  MESU is equivalent to Newton’s method.
Fig.~\ref{fig:hessian_approx}c  shows empirical support for Theorem 2:  a Bayesian neural network trained using BGD  is also equivalent to Newton's method to a large extent, but only when Hessian diagonal components are above $5 \times 10^{-4}$. 
For BGD, due to the absence of the regularization term present in MESU, $\sigma_{n}$ value remains constrained to the initialization value, $\sigma_{init}=0.1$, which impedes the reduction of Hessian diagonal components.

In Fig.~\ref{fig:hessian_approx}d, we study the temporal evolution of the Hessian approximation of these two methods. We also evaluate MESU and BGD against two state-of-the-art Hessian approximation-based methods, elastic weight consolidation (EWC) \citep{kirkpatrick2017overcoming} and synaptic intelligence (SI) \citep{zenke2017continual}.
In MESU and BGD the quality of the Hessian approximation improves with an increasing number of iterations, $n$. For a lower $n$, the approximation is worse, as the hypothesis of locally constant curvature is not yet satisfied. However, as training progresses and the model converges to a region where the local curvature becomes constant, the approximation greatly improves. Our theoretical result holds as $n$ approaches infinity. At $n$=100,000 iterations, the Hessian approximation accuracy provided by MESU exceeds not only EWC and SI, but also that of BGD.

\end{paragraph}

\begin{figure}[h]
  \centering
  \includegraphics[width=0.90\textwidth]{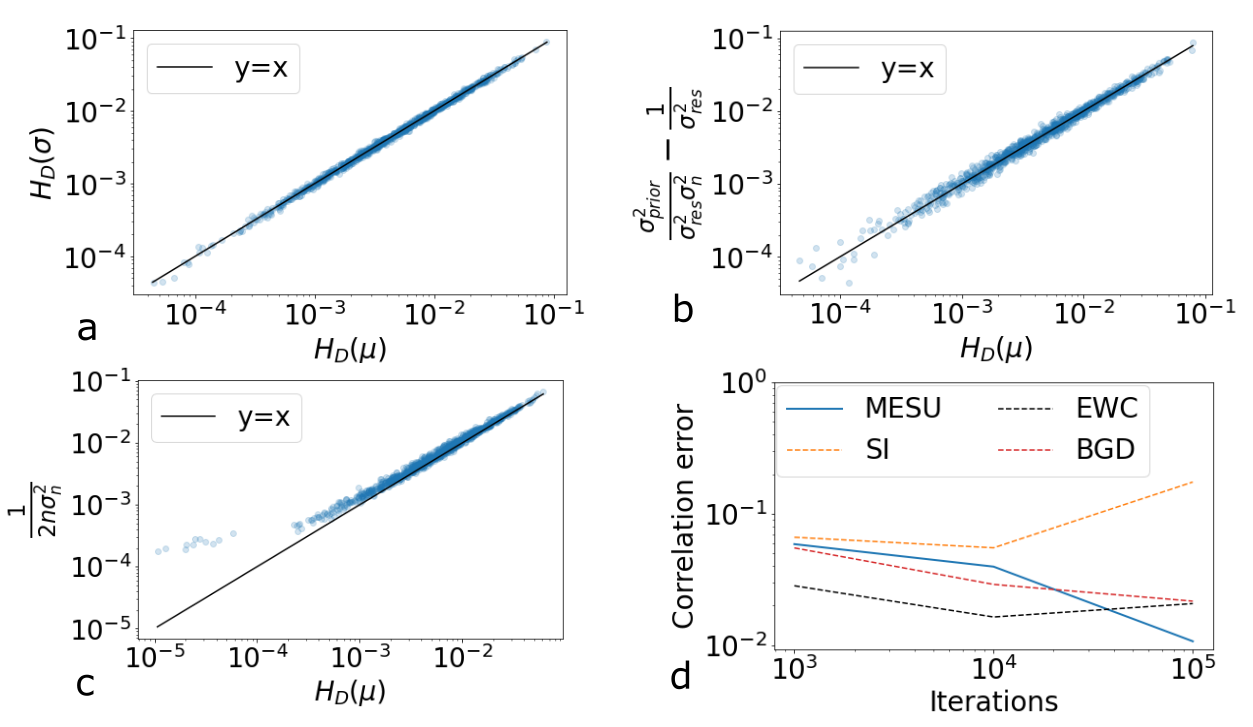}
  \caption{\textbf{Verification of theoretical results.}
  \textbf{a} Hessian diagonal with respect to $\mu$ vs. the Hessian diagonal with respect to $\sigma$ after $n$=100,000 iterations. \textbf{b} Approximation of the Hessian diagonal  by MESU (Eq.~(\ref{eq:HessianResult})) vs.  actual Hessian diagonal. \textbf{c} Approximation of the Hessian diagonal by MESU without regularization term vs. actual Hessian diagonal. \textbf{d} Correlation between the Hessian actual matrix and its approximation for EWC, SI, and MESU as a function of iteration updates.}
  \label{fig:hessian_approx}
\end{figure}

\begin{paragraph}{Vanishing plasticity}
One key advantage of MESU over BGD should be that, as the number of iterations $ n$ increases, the $\sigma$ parameters do not tend to zero - an effect we refer to as \textit{vanishing plasticity}: with MESU, the network should remain plastic (i.e., it can continually learn new tasks) due to the regularization term. To test this, we trained the neural network on 100 permutations of MNIST \citep{van2019three} using both MESU and BGD. The test accuracy in each of 100 tasks is evaluated after each epoch. Each permutation is learned over 20 epochs. Both models operate without any prior knowledge of task boundaries, complicating the preservation of previous knowledge. As shown in Fig.~\ref{fig:benchmark}a, the tasks evolve continuously, and the MESU method mitigates catastrophic forgetting. Fig.~\ref{fig:benchmark}b monitors the range of $\sigma$ values assumed by the models synapses as the learning process progresses for both methods. 
In the MESU method, the mean value of $\sigma$ remains relatively constant throughout the entire training process.
In contrast, for BGD, $\sigma$ decreases as the number of epochs increases. After 2000 epochs, corresponding to 100 tasks, the low $\sigma$ values will impede the network from learning new tasks effectively, indicating the vanishing plasticity problem.
This difference has important consequences on the learning process. We assessed continual learning at two key stages: the initial phase (the first ten tasks, Fig.~\ref{fig:benchmark}c) and the later phase (the last ten tasks, Fig.~\ref{fig:benchmark}d).
In the initial phase, BGD demonstrates a slight advantage, displaying less forgetting of the initial ten tasks. However, after 100 tasks, BGD experiences a significant drop in accuracy of around 6\%, while  MESU retains its ability to learn, almost equivalently to the initial phase.

\end{paragraph}

\begin{figure}[h]
  \centering
  \includegraphics[width=0.95\textwidth]{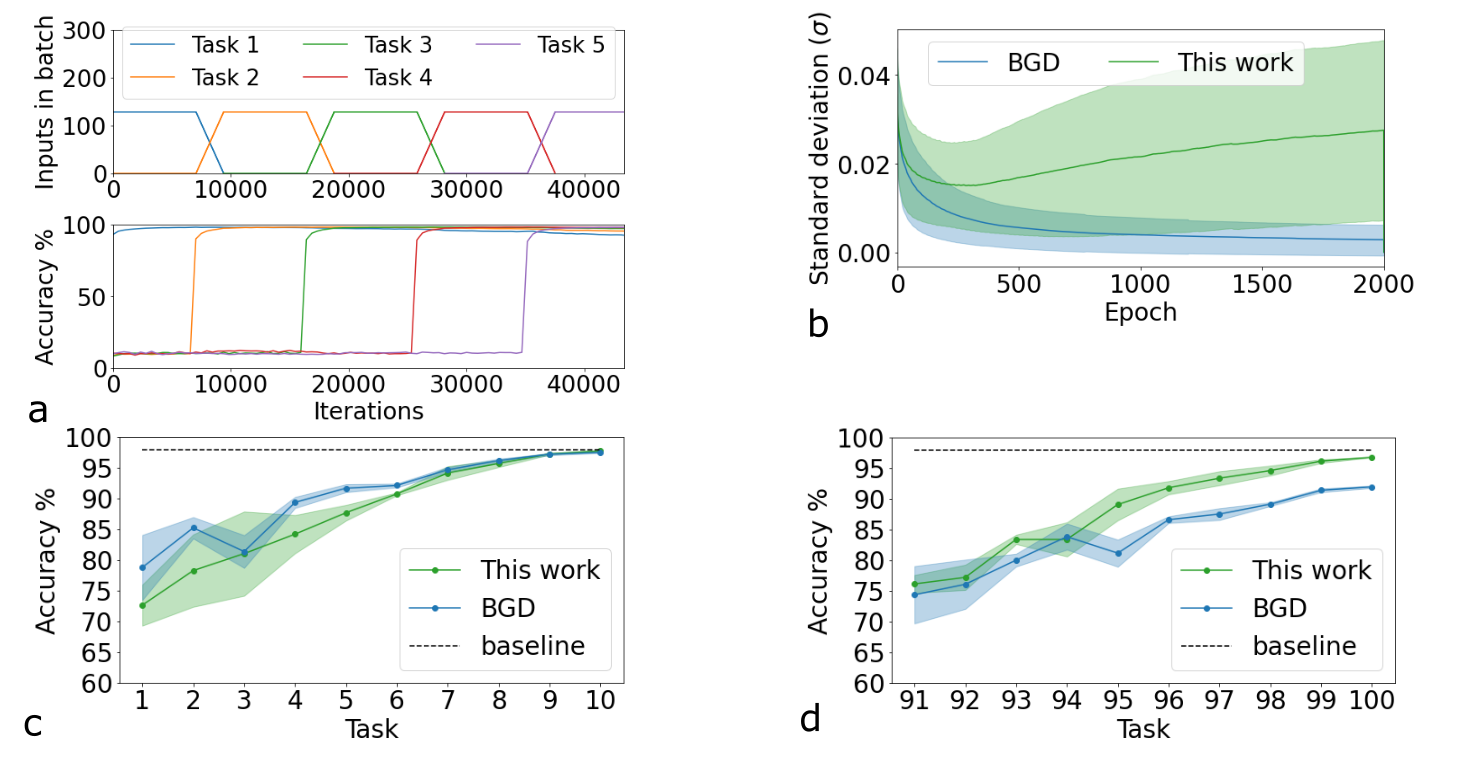}
  \caption{\textbf{Benchmark between BGD and MESU}. \textbf{a} Evolution of the number of input belonging to each task (different permutation of MNIST) present in a training batch as a function of the number of iterations, along with the evolution of MESU test accuracy corresponding to each task. \textbf{b} Evolution of the mean value of the synaptic uncertainty with respect to the number of epochs for BGD and MESU.  \textbf{c,d} Test accuracy for the last ten tasks learned after learning ten tasks (\textbf{c}) and 100 tasks (\textbf{d}) for BGD and MESU, averaged over 3 trials. Error bars represent one standard deviation. } 
  \label{fig:benchmark}
\end{figure}
\FloatBarrier

\section{Discussion}\label{sec4}

This paper presents three contributions: (1) the establishment of a link between synaptic uncertainty and the Hessian diagonal, (2) the introduction of a novel update rule, MEtaplasticity from Synaptic Uncertainty (MESU), which allows continuous learning without vanishing plasticity, and (3) a theoretical and empirical validation of these concepts. Our work also offers a novel interpretation of the previously proposed Bayesian Gradient Descent (BGD), enriching its theoretical underpinnings.

Our method is not alone in attempting continual learning without task boundaries.  \citet{zeno2018task} proposed task-boundary-less versions of  Online Elastic Weight Consolidation \citep{schwarz2018progress} and Memory-Aware Synapses \citep{aljundi2018memory}. While these methods show promise, previous work showed that BGD outperforms them in the absence of task boundaries \citep{zeno2018task}. However, all these techniques, including BGD, deteriorate in performance as the number of tasks increases. Our method not only largely matches BGD's performance for a limited number of initial tasks, but also maintains its learning capability over a massively extended range of tasks where BGD suffers from vanishing plasticity. 

Replay-based methods are another avenue for addressing catastrophic forgetting. They require sophisticated sample selection when task boundaries are absent \citep{aljundi2019gradient}. Still, an exciting future direction would be to investigate the potential synergies \citep{nguyen2017variational} between replay-based methods and MESU to achieve even better performance.

Our method inherits the computational challenges associated with Bayesian Neural Networks, primarily the expensive sampling process. While this is a limitation, recent work on hardware implementations of Bayesian Neural Networks, using e.g., memristors, suggests a promising avenue \citep{dalgaty2021ex,lin2023uncertainty,bonnet2023bringing}. Our work shows a unique convergence between Hessian computation, biological insights, and Bayesian principles, which is not only informative, but offers a deeper understanding of how to solve the stability-plasticity dilemma \citep{mermillod2013stability} in long-term continual learning scenarios.

\FloatBarrier

\bibliography{main}

\FloatBarrier

\section*{Acknowledgements}
This work was supported by European Research Council consolidator
grant DIVERSE (reference: 101043854) and by EIC Pathfinder METASPIN grant (reference: 101098651). In addition, we thank L. Hutin, A.
Laborieux, for discussing various aspects of the article.

\section*{Appendices}

\subsection*{A: Lemma}
\setcounter{lemma}{0}
\begin{lemma}[Hessian diagonal via First-Order Derivative in Bayesian Neural Networks]

Consider a mean-field Gaussian, $q(\omega|\theta)$, describing a Bayesian neural network, where $\theta=(\mu,\sigma)$, and weight samples are defined as $\omega=\mu+\epsilon \cdot \sigma$, with $\epsilon \sim \mathcal{N}(0, 1)$. Let $\mathcal{L}$ be the negative log-likelihood and $\mathcal{C}$ the expectation of $\mathcal{L}$, defined as $\mathcal{C}=\mathbb{E}_{q(\omega|\theta)}[\mathcal{L}(\omega)]$. The diagonal of the Hessian matrix of the expectation with respect to $\mu$ is given by: $H_D(\mu)=\frac{1}{\sigma} \frac{\partial \mathcal{C}}{\partial \sigma}$.
\end{lemma}

\begin{proof}
By definition, the diagonal elements of the Hessian matrix with respect to $\mu$ are:
$H_D(\mu) = \frac{\partial^2 \mathcal{C}}{\partial \mu^2}.$
Given that:
$\frac{\partial \mathcal{C}}{\partial \mu} = \mathbb{E}_{\epsilon}\left[\frac{\partial \mathcal{L}(\omega)}{\partial \omega}\right],
$
we deduce:
$H_D(\mu) = \mathbb{E}_{\epsilon}\left[\frac{\partial^2 \mathcal{L}(\omega)}{\partial \omega^2}\right].$

Similarly, we have:
$\frac{\partial \mathcal{C}}{\partial \sigma} = \mathbb{E}_{\epsilon}\left[\frac{\partial \mathcal{L}(\omega)}{\partial \omega} \times \epsilon\right].$

Using Stein's lemma, we obtain the following relation:
$
\mathbb{E}_{\omega}\left[\frac{\partial \mathcal{L}(\omega)}{\partial \omega} \times (\omega -\mu)\right] = \sigma^2 \mathbb{E_{\omega}}\left[\frac{\partial^2 \mathcal{L}(\omega)}{\partial \omega^2}\right].$
Therefore,
$
\mathbb{E}_{\epsilon}\left[\frac{\partial \mathcal{L}(\omega)}{\partial \omega} \times \epsilon\right] = \sigma \mathbb{E}_{\epsilon}\left[\frac{\partial^2 \mathcal{L}(\omega)}{\partial \omega^2}\right].$

We recognize the left term as $\frac{\partial \mathcal{C}}{\partial \sigma}$ and conclude:
\begin{equation}\label{eq:Hessian_mu_grad_sigma}
\boxed{H_D(\mu) = \frac{1}{\sigma} \frac{\partial \mathcal{C}}{\partial \sigma}}.
\end{equation}
\end{proof}

\subsection*{B: Sequences Analysis}
\FloatBarrier

\begin{proposition}[Study of a sequence convergence for Theorem 1]
Consider the sequence $\alpha_{n+1}=\frac{\alpha_n(1-\alpha_n^2)}{1-\alpha_{lim}^2}$, with $0<\alpha_0<1$ and $0<\alpha_{lim}<\frac{1}{\sqrt{3}}$. Then, the sequence on $(\alpha_n)$ converges to:
\begin{equation}\label{eq:}
\lim_{n \to +\infty} \alpha_n = \alpha_{lim}.
\end{equation}
\end{proposition}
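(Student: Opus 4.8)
The plan is to treat the recurrence as the iteration of the fixed map $f(x)=\dfrac{x(1-x^2)}{1-\alpha_{lim}^2}$ on $[0,1]$ and to establish $\alpha_n\to\alpha_{lim}$ by the monotone convergence theorem applied to a forward-invariant interval. First I would locate the fixed points: solving $f(x)=x$ gives $x=0$ or $x^2=\alpha_{lim}^2$, so the only fixed points in $[0,1]$ are $0$ and $\alpha_{lim}$. Differentiating, $f'(x)=\dfrac{1-3x^2}{1-\alpha_{lim}^2}$, so $f$ is strictly increasing on $[0,1/\sqrt3]$ and decreasing on $[1/\sqrt3,1]$, attaining its maximum at $x=1/\sqrt3$ with value $f(1/\sqrt3)=\dfrac{2}{3\sqrt3\,(1-\alpha_{lim}^2)}$.

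Next I would show that the orbit enters and never leaves the interval $(0,1/\sqrt3)$, on which $f$ is increasing. Here the hypothesis $\alpha_{lim}<1/\sqrt3$ is used: it gives $\alpha_{lim}^2<1/3$, hence $1-\alpha_{lim}^2>2/3$, so the maximum value satisfies $f(1/\sqrt3)<\dfrac{2}{3\sqrt3}\cdot\dfrac{3}{2}=\dfrac{1}{\sqrt3}$. Thus $f$ maps all of $[0,1]$ into $[0,1/\sqrt3)$; in particular $\alpha_1=f(\alpha_0)\in(0,1/\sqrt3)$ (positivity uses $0<\alpha_0<1$), and since $f$ is increasing on $[0,1/\sqrt3]$ it sends this interval into $[0,f(1/\sqrt3)]\subset[0,1/\sqrt3)$, making it forward-invariant. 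Hence from $n=1$ onward the orbit stays in $(0,1/\sqrt3)$, where $f$ is monotone.

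On this interval I would read off the sign of $f(x)-x$ from the factorization
\[
f(x)-x=\frac{x(\alpha_{lim}^2-x^2)}{1-\alpha_{lim}^2}=\frac{x(\alpha_{lim}-x)(\alpha_{lim}+x)}{1-\alpha_{lim}^2},
\]
which is positive on $(0,\alpha_{lim})$ and negative on $(\alpha_{lim},1/\sqrt3)$. Combined with monotonicity of $f$, which makes each of $(0,\alpha_{lim})$ and $(\alpha_{lim},1/\sqrt3)$ forward-invariant and forces the orbit to be monotone, this yields: if $\alpha_1<\alpha_{lim}$ the sequence is increasing and bounded above by $\alpha_{lim}$; if $\alpha_1>\alpha_{lim}$ it is decreasing and bounded below by $\alpha_{lim}$; and if $\alpha_1=\alpha_{lim}$ it is constant. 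In every case $(\alpha_n)$ is monotone and bounded, hence convergent, and by continuity its limit is a fixed point of $f$; since the limit remains positive (it is bounded below by $\min(\alpha_1,\alpha_{lim})>0$), it cannot be $0$ and must therefore equal $\alpha_{lim}$.

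The main obstacle is the confinement step: one must genuinely exploit $\alpha_{lim}<1/\sqrt3$ to guarantee $f(1/\sqrt3)<1/\sqrt3$, so that the iterates cannot escape onto the decreasing branch of $f$. Without this, the orbit could straddle the turning point $1/\sqrt3$ and the clean monotonicity argument would break down. Once forward-invariance of $(0,1/\sqrt3)$ and monotonicity of $f$ there are secured, the sign analysis of $f(x)-x$ and the appeal to the monotone convergence theorem are routine.
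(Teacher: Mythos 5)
Your proof is correct and follows essentially the same route as the paper's: a case split on which side of $\alpha_{lim}$ the orbit lies, monotonicity plus boundedness to get convergence, and identification of the limit as a fixed point. You are in fact slightly more careful than the paper on two points it leaves implicit — the confinement step showing $f$ maps $[0,1]$ into $[0,1/\sqrt{3})$ so the orbit cannot straddle the turning point, and the explicit exclusion of the fixed point $0$ as the limit.
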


\begin{proof}

\FloatBarrier

To demonstrate the convergence of this sequence we examine the two possible situations: $\alpha_0<\alpha_{lim}$ and $\alpha_0>\alpha_{lim}$.

$(i)$: Firstly, we consider $\alpha_0<\alpha_{lim}$.

By induction we establish that this upper bound stands $\forall n \in \mathbb{N}, \hspace{0.1cm} \alpha_n < \alpha_{lim}$.

\textbf{Base Case (Initialization)}: For n=0 

This holds true by the given assumptions.

\textbf{Inductive Step}: Assume for some arbitrary $n \in \mathbb{N}$ that : $\alpha_n < \alpha_{lim}$.

\begin{equation}\label{eq:}
\alpha_{n+1} -\alpha_{lim}= \frac{\alpha_n(1-\alpha_n^2)-\alpha_{lim}(1-\alpha_{lim}^2)}{1-\alpha_{lim}^2}.
\end{equation}

We define $f(x)=x(1-x^2)$. A simple analysis of the function gives $f$ increasing over the interval $[0,\frac{1}{\sqrt{3}}]$  and decreasing over the interval $[\frac{1}{\sqrt{3}},1]$ reaching its maximum value of $\frac{2}{3\sqrt{3}}$ at $\frac{1}{\sqrt{3}}$. The sequence is rewritten:

\begin{equation}\label{eq:}
\alpha_{n+1} -\alpha_{lim}= \frac{f(\alpha_n)-f(\alpha_{lim})}{1-\alpha_{lim}^2}.
\end{equation}

Given that the function $f$ is increasing over the interval $[0,\frac{1}{\sqrt{3}}]$ and that $0<\alpha_n<\alpha_{lim}<\frac{1}{\sqrt{3}}$, we can deduce that $f(\alpha_n)-f(\alpha_{lim}) <0$. Consequently: $\alpha_{n+1}<\alpha_{lim}$.

Therefore, 
\begin{equation}\label{eq:}
\forall n \in \mathbb{N}, \hspace{0.1cm} \alpha_n < \alpha_{lim}.
\end{equation}

According to the definition of the sequence $\alpha_{n+1}>\alpha_n$, hence the sequence $(\alpha_n)$ is monotonically increasing.
The sequence $(\alpha_n)$ is both increasing and upper bounded by $\alpha_{lim}$, $(\alpha_n)$ therefore, the function must converge. We denote the limit of this convergence as $l$.

When convergence is achieved the sequence becomes:  
$l= \frac{l(1-l^2)}{1-\alpha_{lim}^2}$
and we can deduce: $l=\alpha_{lim}$.

Finally:

\begin{equation}\label{eq:liù}
0<\alpha_0<\alpha_{lim}<\frac{1}{\sqrt{3}} \implies \lim_{n \to +\infty} \alpha_n = \alpha_{lim}.
\end{equation}

$(ii)$: Secondly, we consider $\alpha_{lim}<\alpha_0$.

If there exists an $n_0$ such that $\alpha_{n_0}<\alpha_{lim}$, then we can apply result of Eq. (\ref{eq:liù}) of the previous analysis. If no such $n_0$ exists, $\forall n \in \mathbb{N},$  $\alpha_{lim}<\alpha_n$,  we can immediately infer from the sequence's definition that  $\alpha_{n+1}<\alpha_n$.
The sequence $(\alpha_n)$ is lower bounded by $\alpha_{lim}$, which implies that the sequence $(\alpha_n)$ is monotonically decreasing. Therefore the sequence $(\alpha_n)$ converges and the limit is $\alpha_{lim}$, concluding the proof for any initial values satisfying $0<\alpha_0<1$ and $0<\alpha_{lim}<\frac{1}{\sqrt{3}}$:

\begin{equation}\label{eq:}
\boxed{\lim_{n \to +\infty} \alpha_n = \alpha_{lim}}.
\end{equation}

\end{proof}

\begin{proposition}[Study of a sequence's asymptotic behavior for Theorem 2]
Consider the sequence $\alpha_{n+1}=\alpha_n(1-\alpha_n^2)$ where $0<\alpha_0<1$. Then, the sequence on $(\alpha_n)$ converge to zero, and :
\begin{equation}\label{eq:}
\lim_{n \to +\infty} \alpha_n \sqrt{2n} =1.
\end{equation}
\end{proposition}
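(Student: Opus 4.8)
The plan is to first dispatch the easy claim that $\alpha_n \to 0$, and then to extract the precise decay rate via a change of variables that asymptotically linearizes the recurrence.

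For the convergence, I would observe by induction that the sequence stays trapped in $(0,1)$: if $0 < \alpha_n < 1$ then $0 < 1-\alpha_n^2 < 1$, so $0 < \alpha_{n+1} = \alpha_n(1-\alpha_n^2) < \alpha_n < 1$. This simultaneously shows that $(\alpha_n)$ is strictly decreasing and bounded below by $0$, hence convergent to some limit $l \geq 0$. Passing to the limit in the recurrence yields $l = l(1-l^2)$, i.e. $l^3 = 0$, forcing $l = 0$.

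For the rate, the key idea is to pass to $u_n := 1/\alpha_n^2$, which tends to $+\infty$. From the recurrence,
\[
u_{n+1} = \frac{1}{\alpha_n^2(1-\alpha_n^2)^2} = u_n\left(1-\frac{1}{u_n}\right)^{-2}.
\]
Since $1/u_n = \alpha_n^2 \to 0$, I would expand $\left(1-\frac{1}{u_n}\right)^{-2} = 1 + \frac{2}{u_n} + O\!\left(\frac{1}{u_n^2}\right)$, obtaining
\[
u_{n+1} - u_n = 2 + O\!\left(\frac{1}{u_n}\right) \xrightarrow[n\to\infty]{} 2.
\]

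Finally I would invoke the Cesàro--Stolz theorem (the additive mean version): since the increments satisfy $u_{n+1} - u_n \to 2$, it follows that $u_n/n \to 2$, i.e. $u_n \sim 2n$. Unwinding the substitution gives $\alpha_n^2 \sim \frac{1}{2n}$, and therefore $\alpha_n\sqrt{2n} \to 1$, as desired. The only genuinely delicate point is this asymptotic step: one must first guarantee $u_n \to \infty$ (which follows from $\alpha_n \to 0$) in order to legitimize the Taylor expansion, and then correctly apply Cesàro--Stolz to transfer convergence of the increments to convergence of the average $u_n/n$. Everything else reduces to the monotonicity argument and a routine expansion.
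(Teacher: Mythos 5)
Your proof is correct, and it takes a genuinely different and considerably more economical route than the paper's. You linearize the recurrence through the substitution $u_n = 1/\alpha_n^2$, observe that once $\alpha_n \to 0$ is established the increments satisfy $u_{n+1}-u_n \to 2$, and conclude $u_n \sim 2n$ by Ces\`aro--Stolz; this is the classical treatment of recurrences of the form $x_{n+1}=x_n - c\,x_n^{p+1}+o(x_n^{p+1})$, and every step is standard. The paper instead works directly with $\alpha_n$: it first proves the one-sided bound $\alpha_n \le 1/\sqrt{2n}$ by induction (using a Taylor expansion of $1/\sqrt{2n+2}$), then introduces the auxiliary sequences $V_n = \tfrac{n}{2} - n^2\alpha_n^2$ and $P_n = \alpha_n\sqrt{2n}$, shows $P_n$ is eventually increasing and bounded above by $1$ so that it converges to some $c \le 1$, and finally identifies $c=1$ through a logarithmic/series argument establishing $n\left(\tfrac{1}{2n}-\alpha_n^2\right)\to 0$. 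The two arguments reach the same asymptotics, but yours is shorter, avoids the paper's case split on whether $V_n$ converges or diverges, and dispenses with the delicate bookkeeping of Taylor remainders and harmonic-series comparisons. The one genuinely delicate point you rightly flag --- that $u_n\to\infty$ must be secured before expanding $\left(1-\tfrac{1}{u_n}\right)^{-2}$ --- is supplied by your monotone-convergence argument, so there is no gap; what the paper's longer route buys is only the explicit non-asymptotic bound $\alpha_n \le 1/\sqrt{2n}$, which your argument does not produce but which the proposition does not require.
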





$(i)$: By induction, we establish that: $\forall n \in \mathbb{N^*}$, $\alpha_n \leq\frac{1}{\sqrt{2n}}$.

\textbf{Base Case (Initialization)}: n=1,2

From proposition 1 we establish that $f(x)=x(1-x^2) \leq \frac{2}{3 \sqrt 3}$. 

Thus, $\alpha_1 \leq \frac{2}{3 \sqrt 3}\leq \frac{1}{\sqrt 2}$ and $\alpha_2 \leq \frac{2}{3 \sqrt 3} \leq \frac{1}{\sqrt 4}$, confirming the base case.

\textbf{Inductive Step}: We assume the proposition holds for $n \geq 2 \in \mathbb{N}$, i.e., $\alpha_n \leq\frac{1}{\sqrt{2n}}.$


As $f$ is monotonically increasing on $[0,\frac{1}{\sqrt{3}}]$, and from the inductive hypothesis $f(\alpha_{n}) \leq f(\frac{1}{\sqrt{2n}}) $ provided:
\begin{equation}\label{eq:ineq}
\alpha_{n+1} \leq \frac{1}{\sqrt{2n}}(1-\frac{1}{2n}).
\end{equation}

Using Taylor's expansion we have the following expression :\begin{equation}\label{eq:taylor}
\frac{1}{\sqrt{2n+2}}=\frac{1}{\sqrt{2n}}(1-\frac{1}{2n}+\frac{3}{8n^2}+R_3(\frac{1}{n})).  
\end{equation}

Where $R_3$ represents the remainder of the Taylor series and Taylor's inequality gives $\forall n \geq 2$, $\lvert {R_3(\frac{1}{n})} \rvert <\frac{3}{8n^2}$. Eq.(\ref{eq:ineq}) and Eq.(\ref{eq:taylor}) lead to $\alpha_{n+1} \leq \frac{1}{\sqrt{2n+2}}$


\begin{equation}\label{eq:res i}
\forall n \in \mathbb{N^*}, \alpha_{n} \leq \frac{1}{\sqrt{2n}}.
\end{equation}

$(ii)$: We consider a sequence $(V_n)$ defined as: $V_n=\frac{n}{2} -n^2\alpha_n^2$ and demonstrate that the sequence $(V_n)$ is monotonically increasing. Taking the difference between successive terms, we have:

\begin{equation}\label{eq:}
V_{n+1}-V_{n}= \frac{1}{2}-(n+1)^2\alpha_{n+1}^2+n^2\alpha_n^2.
\end{equation}

Using the result from  $(i)$ we ascertain that: $(n+1)^2\alpha_{n+1}^2 \leq \frac{1}{2}$.

From which we infer $V_{n+1}-V_{n} \geq n^2\alpha_n^2 \geq 0$

Involving that the sequence $(Vn)$ is increasing.There is two possibilities: First possibility, ($V_n$) converges and the limit is: 
\begin{equation}\label{eq:}
\lim_{n \to +\infty}{V_n} = c \in \mathbb{R}.
\end{equation}

In consequence :

\begin{equation}\label{eq:}
\lim_{n \to +\infty} \frac{V_n}{n} = \lim_{n \to +\infty} n (\frac{1}{2n} -\alpha_n^2) =0.
\end{equation}

Concluding the proof when ($V_n$) converge:

\begin{equation}\label{eq:}
\boxed{\lim_{n \to +\infty} \alpha_n \sqrt{2n} =1}.
\end{equation}

Second possibility:

\begin{equation}\label{eq:ii}
 \lim_{n \to +\infty}{V_n} = +\infty.
\end{equation}

$(iii)$: We consider the sequence $(P_n)$ defined by: $P_n=\alpha_n \sqrt{2n}$.

We demonstrate that there is a given $n_0 \in \mathbb{N^*}$ such that $\forall n \geq n_0$, $P_n \leq P_{n+1}$. Involving $(P_n)$  increasing beyond $n_0$.

Looking at the ratio between successive terms, we find:

\begin{equation}\label{eq:}
\frac{P_{n+1}}{P_n}=\frac{\alpha_n(1-\alpha_n^2)\sqrt{2n+2}}{\alpha_n\sqrt{2n}}=(1-\alpha_n^2)\sqrt{1+\frac{1}{n}}.
\end{equation}

Further expanding, we have:

\begin{equation}\label{eq:}
\frac{P_{n+1}}{P_n}=1-\alpha_n^2+\frac{1}{2n}+O(\frac{1}{n^2}).
\end{equation}

Using the results from $(i)$, we ascertain $\frac{1}{2n}-\alpha_n^2 \geq 0$. Additionally, by invoking $(ii)$, it becomes clear that as $n$ grows, the term $O(\frac{1}{n^2})$ become negligible relative to $(\frac{1}{2n}-\alpha_n^2)$. Consequently, we can deduce that there exists a particular $n_0 \in \mathbb{N^*}$ for which all $n \geq n_0$, $\frac{P_{n+1}}{P_n} \geq 1$.

Concluding to:

\begin{equation}\label{eq:iii}
\exists n_0 \in \mathbb{N^*}, \forall n \geq n_0, (P_n)\nearrow.
\end{equation}

From $(i)$ we know that $(P_n) \leq 1$  and from $(iii)$ we know that $(P_n)$ is monotically increasing and so $(P_n)$ converge: 

\begin{equation}\label{eq:limpn}
\lim_{n \to +\infty} P_n  =c.
\end{equation}

$(iv)$ We consider $(log(P_n))$ demonstrating the convergence of the series $\sum (\frac{1}{2n} -\alpha_n^2)$.

First, let us rewrite $(\alpha_n)$.

\begin{equation}\label{eq:}
 \alpha_{n+2} =\alpha_{n+1}(1-\alpha_{n+1}^2) =  \alpha_{n}(1-\alpha_{n}^2)(1-\alpha_{n+1}^2).
\end{equation}

From the above recursion relation, we can inductively deduce the general form:

\begin{equation}\label{eq:}
 \alpha_{n} =\prod_{k=1}^{n-1}(1-\alpha_{k}^2)\alpha_1.
\end{equation}

This allows us to express $(log(P_n))$ as: 

\begin{equation}\label{eq:}
 log(P_n) =\sum_{k=1}^{n-1}log((1-\alpha_{k}^2)) + log(\alpha_1) +log(\sqrt{2n}),
\end{equation}

which, in the limit as $n$ approaches infinity, is asymptotic to:

\begin{equation}\label{eq:}
 log(P_n) \hspace{0.3cm} \underset{n \to +\infty}{\sim} \hspace{0.3cm} \sum_{k=1}^{n-1}log((1-\alpha_{k}^2))   +\frac{1}{2}log(n).
\end{equation}

From a prior result $(i)$, we have:

\begin{equation}\label{eq:}
log((1-\alpha_{k}^2)) = -\alpha_k^2 +O(\frac{1}{n^2}).
\end{equation}

Invoking the properties of the Harmonic series, we can further deduce:

\begin{equation}\label{eq:}
 log(P_n) \hspace{0.3cm} \underset{n \to +\infty}{\sim} \hspace{0.3cm} \sum_{k=1}^{n-1}-\alpha_{k}^2 +  \frac{1}{2}\sum_{k=1}^{n-1} \frac{1}{k}.
\end{equation}

Given Equation (\ref{eq:limpn}): 
 
 \begin{equation}\label{eq:}
\lim_{n \to +\infty} log(P_n)  =log(c).
\end{equation}

In consequence the series $\sum (\frac{1}{2n} -\alpha_n^2)$ converge. 

Proposition 2 is demonstrated based on the previous arguments .Based on previous results, namely $(i)$ and $(iii)$, we know that $(\frac{1}{2n} -\alpha_n^2) \geq 0$ and the sequence $(\frac{1}{2n} -\alpha_n^2)$ is monotonically decreasing.

We consider the two sequences $(u_n)$ and $(v_n)$ such that $u_n=\frac{1}{2n} -\alpha_n^2$ and $v_n=n(u_{n}-u_{n+1})$.
By rearranging terms, the sum of the first $n$ terms of $v_k$ can be expressed as:

\begin{equation}\label{eq:}
\sum_{k=0}^{n} v_k=\sum_{k=1}^{n} ku_{k}-\sum_{k=1}^{n+1} (k-1)u_{k},
\end{equation}

which gives:

\begin{equation}\label{eq:}
\sum_{k=0}^{n} u_k=\sum_{k=0}^{n} v_k + nu_{n+1}.
\end{equation}

Given that $\sum u_k$ converges in $(iv)$, it follows that $(\sum_{k=0}^{n} v_k + nu_{n+1})$ converges too. Since $u_n \geq 0$  and the sequence $(u_n)$ is monotonically decreasing, $v_n \geq 0$. Hence, both of the series $\sum v_k$ and $nu_{n+1}$ converge.

Considering the limit of the product $nu_n$, we have:

\begin{equation}\label{eq:}
\lim_{n \to +\infty} n u_n = l.
\end{equation}

If $l>0$, we have $u_n \underset{n \to +\infty}{\sim} \frac{l}{n}$. However, this would lead to the divergence of $\sum u_k$ and consequently we deduce that $l=0$ and therefore:

\begin{equation}\label{eq:}
\lim_{n \to +\infty} n u_n =0,
\end{equation}

from which we can deduce:

 \begin{equation}\label{eq:}
\lim_{n \to +\infty} \frac{V_n}{n} = \lim_{n \to +\infty} n (\frac{1}{2n} -\alpha_n^2) =0.\end{equation}

Concluding the proof when ${V_n} \rightarrow \infty.$:

\begin{equation}\label{eq:}
\boxed{\lim_{n \to +\infty} \alpha_n \sqrt{2n} =1}
\end{equation}
\newpage

\subsection*{C: Theorems demonstration}

\setcounter{theorem}{0}
\begin{theorem}\label{thm:MESU}[Hessian Diagonal from Synaptic Uncertainty in Bayesian Neural Networks]
Let $q(\omega|\theta)$ represent the mean-field Gaussian describing the Bayesian neural network, where $\theta=(\mu,\sigma)$, and weight samples are defined as $\omega=\mu+\epsilon \cdot \sigma$, with $\epsilon \sim \mathcal{N}(0, 1)$. Consider $\mathcal{L}$ as the negative log-likelihood and $\mathcal{C}$ as the expectation of $\mathcal{L}$, defined as $\mathcal{C}=\mathbb{E}_{q(\omega|\theta)}[\mathcal{L}(\omega)]$. Let $\sigma_{n+1}=\sigma_n-\sigma_n^2 \times \frac{\partial \mathcal{C}}{\partial \sigma_n}+ \frac{\sigma_n(\sigma_{prior}^2 -\sigma_n^2)}{\sigma_{res}^2}$ be the update rule over $\sigma$. If the loss curvature can be expressed as $\frac{1}{\sigma_L^2}$,  $\sigma_0<\sigma_L$ and, $\frac{\sigma_{prior}^2} {\sigma_{res}^2}<<1$, then the diagonal elements of the Hessian matrix with respect to $\mu$ and $\sigma$ are given by:
\begin{equation}\label{eq:}
H_D(\mu)=H_D(\sigma)=\lim_{n \to +\infty} \left[\frac{\sigma_{prior}^2 }{\sigma_{res}^2\sigma_n^2}-\frac{1}{\sigma_{res}^2}\right].
\end{equation}
\end{theorem}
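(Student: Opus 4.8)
The plan is to reduce the stated limit to the fixed-point analysis of Proposition~1 in Appendix~B, after collapsing the $\sigma$-update into the scalar recurrence of Eq.~(\ref{eq:sequence}). First I would record the two facts the hypotheses supply. By Lemma~1, $\frac{\partial \mathcal{C}}{\partial \sigma_n} = \sigma_n\, H_D(\mu)$, and under the locally-constant-curvature assumption Eqs.~(\ref{eq:Hessian_mu_constante})--(\ref{eq:Hessian_sigma_constante}) give $H_D(\mu)=H_D(\sigma)=\tfrac{1}{\sigma_L^2}$. Substituting $\frac{\partial \mathcal{C}}{\partial \sigma_n}=\sigma_n/\sigma_L^2$ into the update rule turns it into the purely scalar recurrence Eq.~(\ref{eq:sequence}). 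So it suffices to determine $\lim_n \sigma_n$ and then evaluate $\frac{\sigma_{prior}^2}{\sigma_{res}^2\sigma_n^2}-\frac{1}{\sigma_{res}^2}$.

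Next I would expand Eq.~(\ref{eq:sequence}) and factor it as $\sigma_{n+1}=B\,\sigma_n\bigl(1-\tfrac{A}{B}\sigma_n^2\bigr)$, where $A=\frac{1}{\sigma_L^2}+\frac{1}{\sigma_{res}^2}$ and $B=1+\frac{\sigma_{prior}^2}{\sigma_{res}^2}$. Introducing the rescaled variable $\alpha_n=\sqrt{A/B}\,\sigma_n$ normalizes the cubic coefficient to one, yielding exactly
\[
\alpha_{n+1}=\frac{\alpha_n(1-\alpha_n^2)}{1-\alpha_{lim}^2},\qquad \alpha_{lim}^2=\frac{B-1}{B}=\frac{\sigma_{prior}^2}{\sigma_{res}^2+\sigma_{prior}^2},
\]
which is the precise form studied in Proposition~1. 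To invoke that proposition I must verify its two hypotheses. The bound $\alpha_{lim}<1/\sqrt{3}$ follows immediately from $\frac{\sigma_{prior}^2}{\sigma_{res}^2}\ll 1$, which forces $\alpha_{lim}^2$ to be tiny. The initialization bound $0<\alpha_0<1$ follows from $\sigma_0<\sigma_L$: then $\frac{\sigma_0^2}{\sigma_L^2}<1$ and, with $\sigma_{res}$ large, $\alpha_0^2=\frac{(\sigma_0^2/\sigma_L^2)+(\sigma_0^2/\sigma_{res}^2)}{1+\sigma_{prior}^2/\sigma_{res}^2}<1$.

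Proposition~1 then yields $\lim_n \alpha_n=\alpha_{lim}$, hence, writing $\sigma_\infty$ for the limit of $\sigma_n$,
\[
\sigma_\infty^2=\frac{B}{A}\,\alpha_{lim}^2=\frac{B-1}{A}=\frac{\sigma_{prior}^2/\sigma_{res}^2}{1/\sigma_L^2+1/\sigma_{res}^2}.
\]
Substituting this into the target expression, a one-line cancellation gives
\[
\frac{\sigma_{prior}^2}{\sigma_{res}^2\sigma_\infty^2}-\frac{1}{\sigma_{res}^2}=\Bigl(\frac{1}{\sigma_L^2}+\frac{1}{\sigma_{res}^2}\Bigr)-\frac{1}{\sigma_{res}^2}=\frac{1}{\sigma_L^2}=H_D(\mu)=H_D(\sigma),
\]
which is the claim (and, as a byproduct, reproduces Eq.~(\ref{eq:HessianResultMesu})).

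The main obstacle is establishing convergence of the recurrence; but since the rescaling maps it exactly onto Proposition~1, the real work is (a) spotting the substitution $\alpha_n=\sqrt{A/B}\,\sigma_n$ that produces the canonical form, and (b) checking the two hypotheses---especially $\alpha_0<1$, which is precisely where the assumption $\sigma_0<\sigma_L$ enters, and $\alpha_{lim}<1/\sqrt{3}$, where $\frac{\sigma_{prior}^2}{\sigma_{res}^2}\ll 1$ is used. Once convergence is secured, the identification of the limit with $1/\sigma_L^2$ is routine algebra and requires no further approximation.
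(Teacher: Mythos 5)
Your proof is correct and follows the same skeleton as the paper's: Lemma~1 plus the constant-curvature assumption collapse the update into the scalar recurrence of Eq.~(\ref{eq:sequence}), which is then rescaled into the canonical form of Proposition~1 and the limit is read off from the fixed point. Where you differ is in how the reduction to Proposition~1 is performed, and your version is genuinely tighter. The paper sets $\alpha_n=\sigma_n\sqrt{1/\sigma_L^2+1/\sigma_{res}^2}$ and $\alpha_{lim}=\sigma_{prior}/\sigma_{res}$, starts from the canonical recurrence, expands $(1-\alpha_{lim}^2)^{-1}\approx 1+\alpha_{lim}^2$, and discards the resulting $O(\sigma_n^3\sigma_{prior}^2/\sigma_{res}^2)$ terms to recover Eq.~(\ref{eq:sequence}); the identification of the two recurrences is therefore only approximate and the error terms are never controlled. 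Your substitution $\alpha_n=\sqrt{A/B}\,\sigma_n$ with $A=1/\sigma_L^2+1/\sigma_{res}^2$ and $B=1+\sigma_{prior}^2/\sigma_{res}^2$ maps Eq.~(\ref{eq:sequence}) onto the canonical form \emph{exactly}, at the cost of shifting the fixed point to $\alpha_{lim}^2=\sigma_{prior}^2/(\sigma_{res}^2+\sigma_{prior}^2)$; since $(B-1)/A=(\sigma_{prior}^2/\sigma_{res}^2)/A$, the final algebra still returns exactly $1/\sigma_L^2$ with no approximation, and the hypothesis $\sigma_{prior}^2/\sigma_{res}^2\ll 1$ is needed only to secure $\alpha_{lim}<1/\sqrt{3}$ (indeed $\sigma_{prior}^2/\sigma_{res}^2<1/2$ would suffice). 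The one place where you are no more careful than the paper is the check $0<\alpha_0<1$: $\sigma_0<\sigma_L$ gives $\sigma_0^2/\sigma_L^2<1$, but the additional term $\sigma_0^2/\sigma_{res}^2$ in $\alpha_0^2$ is not controlled by the stated hypotheses unless one also assumes $\sigma_0\le\sigma_{prior}$ or that $\sigma_0/\sigma_{res}$ is small; the paper makes the same silent assumption, so this is an inherited imprecision rather than a new gap.
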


\begin{proof} First, let us define: $\alpha_{lim}=\frac{\sigma_{prior}}{\sigma_{res}}$, and $\alpha_n=\sigma_{n}\sqrt{\frac{1}{\sigma_{l}^2}+\frac{1}{\sigma_{res}^2}}$ and let us consider the sequence $(\alpha_n)$, defined as:

 \begin{equation}\label{eq:}
\alpha_{n+1}=\frac{\alpha_n(1-\alpha_n^2)}{1-\alpha_{lim}^2}.
\end{equation}

We will now use a numerical approximation to simplify this sequence. When $\alpha_{lim}^2<<1$, we have:

\begin{equation}\label{eq:}
\alpha_{n+1}=\alpha_n(1-\alpha_n^2)(1+\alpha_{lim}^2)
\end{equation}

\begin{equation}\label{eq:}
\sigma_{n+1}=\sigma_n(1-\frac{\sigma_n^2 }{\sigma_L^2}-\frac{\sigma_n^2 }{\sigma_{res}^2})(1+ \frac{\sigma_{prior}^2}{\sigma_{res}^2})
\end{equation}

\begin{equation}\label{eq:}
\sigma_{n+1}=\sigma_n-\frac{\sigma_n^3 }{\sigma_L^2}-\frac{\sigma_n^3 }{\sigma_{res}^2} + \frac{\sigma_n \sigma_{prior}^2}{\sigma_{res}^2} - \frac{\sigma_n^3 \sigma_{prior}^2}{\sigma_{res}^2\sigma_{l}^2} -\frac{\sigma_n^3 \sigma_{prior}^2}{\sigma_{res}^4}
\end{equation}

\begin{equation}\label{eq:}
\sigma_{n+1}=\sigma_n(1-\frac{\sigma_n^2}{\sigma_L^2}) + \frac{\sigma_n(\sigma_{prior}^2 -\sigma_n^2)}{\sigma_{res}^2} - \frac{\sigma_n^3 \sigma_{prior}^2}{\sigma_{res}^2\sigma_{L}^2} -\frac{\sigma_n^3 \sigma_{prior}^2}{\sigma_{res}^4}
\end{equation}

\begin{equation}\label{eq:}
\sigma_{n+1}=\sigma_n(1-\frac{\sigma_n^2}{\sigma_L^2}(1+\frac{\sigma_{prior}^2}{\sigma_{res}^2})) + \frac{\sigma_n(\sigma_{prior}^2 -\sigma_n^2(1+\frac{\sigma_{prior}^2}{\sigma_{res}^2}))}{\sigma_{res}^2}.
\end{equation}

Again, since $\alpha_{lim}^2<<1$, this can be further simplified as:

\begin{equation}\label{eq:}
\sigma_{n+1}=\sigma_n(1-\frac{\sigma_n^2}{\sigma_L^2}) + \frac{\sigma_n(\sigma_{prior}^2 -\sigma_n^2)}{\sigma_{res}^2}.
\end{equation}

We recognize the sequence we obtained in Eq. (\ref{eq:sequence}), which was derived from our update rule, the assumption of locally constant curvature and by invoking Lemma 1.

In Appendix B, Proposition 1, we demonstrate that: 

\begin{equation}\label{eq:}
\lim_{n \to +\infty} \alpha_n = \alpha_{lim}.
\end{equation}

By squaring the result and reconfiguring the expression, we obtain:

\begin{equation}\label{eq:}
\boxed{H_D(\mu)=H_D(\sigma)=\lim_{n \to +\infty} \left[\frac{\sigma_{prior}^2 }{\sigma_{res}^2\sigma_n^2}-\frac{1}{\sigma_{res}^2}\right]}.
\end{equation}

\end{proof}

\begin{theorem}[Hessian Diagonal via Synaptic Uncertainty Asymptote in Bayesian Neural Networks]
Let $q(\omega|\theta)$ represent the mean-field Gaussian describing the Bayesian neural network, where $\theta=(\mu,\sigma)$ and weight samples are defined as $\omega=\mu+\epsilon \cdot \sigma$, with $\epsilon \sim \mathcal{N}(0, 1)$. Consider $\mathcal{L}$ as the negative log-likelihood and $\mathcal{C}$ as the expectation of $\mathcal{L}$, defined as $\mathcal{C}=\mathbb{E}_{q(\omega|\theta)}[\mathcal{L}(\omega)]$. Let $\sigma_{n+1}=\sigma_n-\sigma_n^2\times \frac{\partial \mathcal{C}}{\partial \sigma_n}$ be the update rule over $\sigma$. If the loss curvature can be expressed as $\frac{1}{\sigma_L^2}$ and $\sigma_0<\sigma_L$, then the diagonal elements of the Hessian matrix with respect to $\mu$ and $\sigma$ are given by:
\begin{equation}
 H_D(\mu)=H_D(\sigma)=\lim_{{n \to +\infty}} \frac{1}{2n\sigma_n^2}.  
\end{equation}
\end{theorem}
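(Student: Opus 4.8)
The plan is to reduce this statement to Proposition 2 by an appropriate rescaling of the $\sigma$-recursion, exactly as Theorem 1 was reduced to Proposition 1. The first step is to pin down the right-hand side of the claimed limit. Invoking Lemma 1, $\frac{\partial \mathcal{C}}{\partial \sigma_n} = \sigma_n H_D(\mu)$, and under the locally-constant-curvature assumption Eqs.~(\ref{eq:Hessian_mu_constante}) and (\ref{eq:Hessian_sigma_constante}) give $H_D(\mu)=H_D(\sigma)=\frac{1}{\sigma_L^2}$. Substituting $\frac{\partial \mathcal{C}}{\partial \sigma_n}=\frac{\sigma_n}{\sigma_L^2}$ into the update rule $\sigma_{n+1}=\sigma_n-\sigma_n^2\frac{\partial \mathcal{C}}{\partial \sigma_n}$ collapses it to the clean cubic recursion
\begin{equation}\label{eq:thm2seq}
\sigma_{n+1}=\sigma_n\left(1-\frac{\sigma_n^2}{\sigma_L^2}\right).
\end{equation}

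Next I would perform the change of variable $\alpha_n=\frac{\sigma_n}{\sigma_L}$, under which Eq.~(\ref{eq:thm2seq}) becomes precisely $\alpha_{n+1}=\alpha_n(1-\alpha_n^2)$. The hypothesis $\sigma_0<\sigma_L$ guarantees $0<\alpha_0<1$, so the sequence $(\alpha_n)$ satisfies exactly the hypotheses of Proposition 2. I can therefore quote that proposition verbatim to obtain $\lim_{n\to+\infty}\alpha_n\sqrt{2n}=1$, which is the entire analytic content of the theorem; no fresh convergence argument is needed here.

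The final step is purely algebraic unwinding. Squaring the Proposition 2 limit gives $\lim_{n\to+\infty}2n\,\alpha_n^2=1$, that is, $\lim_{n\to+\infty}\frac{2n\,\sigma_n^2}{\sigma_L^2}=1$, equivalently $\lim_{n\to+\infty}\frac{1}{2n\,\sigma_n^2}=\frac{1}{\sigma_L^2}$. Since the first step identified $\frac{1}{\sigma_L^2}$ with $H_D(\mu)=H_D(\sigma)$, this yields the asserted identity. The only genuine mathematical difficulty sits in Proposition 2's asymptotic $\alpha_n\sqrt{2n}\to 1$ (the delicate part being the $\frac{1}{\sqrt{2n}}$ upper bound via the Taylor remainder estimate and the Cesàro-type telescoping that forces $n u_n\to 0$), but that work is already discharged in Appendix B; within this proof the step to watch is simply verifying that the rescaling is exact and that $\alpha_{lim}$ does not appear, so that the \emph{regularization-free} recursion genuinely lands on the $\alpha_{n+1}=\alpha_n(1-\alpha_n^2)$ form rather than the normalized form used for Theorem 1.
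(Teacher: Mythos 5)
Your proposal is correct and follows essentially the same route as the paper's own proof: apply Lemma~1 and the constant-curvature hypothesis to reduce the update to $\sigma_{n+1}=\sigma_n\bigl(1-\tfrac{\sigma_n^2}{\sigma_L^2}\bigr)$, rescale by $\alpha_n=\sigma_n/\sigma_L$ to land on Proposition~2, and square the resulting limit. No substantive differences to report.
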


\begin{proof}
Thanks to the Lemma 1 and the locally constant curvature hypothesis, the update rule over $\sigma$ becomes: $\sigma_{n+1}=\sigma_n(1-\frac{\sigma_n^2}{\sigma_L^2})$.
By introducing $\alpha_n=\frac{\sigma_n}{\sigma_L}$, we establish the following recurrent sequence:

 \begin{equation}\label{eq:}
\alpha_{n+1}=\alpha_n(1-\alpha_n^2)
\hspace{3cm}
0<\alpha_0<1
\end{equation}

In Appendix B, Proposition 2, we demonstrate:

\begin{equation}\label{eq:}
\lim_{n \to +\infty} \alpha_n \sqrt{2n} =1,
\end{equation}

using $\alpha_n=\frac{\sigma_n}{\sigma_L}$ and $H_D(\mu)=\frac{1}{\sigma_L^2}$. Taking the square of the result we obtain: 

\begin{equation}
\boxed{H_D(\mu)=H_D(\sigma)=\lim_{{n \to +\infty}} \frac{1}{2n\sigma_n^2}}.  
\end{equation}

\end{proof}

\subsection*{D: Experimental Implementation}

\begin{paragraph}{Verification of theoretical result}
The model was trained on MNIST where the input images were scaled between 0 and 1. We employed a fully connected neural network with one hidden layer of 100 neurons, using ReLU activation function. 
The model was trained over 100,000 mini-batches, where the batch size was 100.
The mean values of the Bayesian Neural Network parameter distributions were initialized to the weights obtained from training 
an equivalent deterministic model - 
aligning with the hypothesis of locally constant curvature even at $n = 0$. We used the following hyperparameters: $\sigma_0=0.1$, $\sigma_{prior}= 0.1$, and $\sigma_{res}= 10$. 
\end{paragraph}

\begin{paragraph}{Comparison with the Fisher matrix and SI approximation of the Hessian}
The model was trained on MNIST task in the same fashion as for the verification of the theoretical results, 
besides the fact the mean values of the Bayesian neural network 
and the weights of the deterministic neural network were initialized via Kaiming uniform initialization. We used the following hyperparameters, $\sigma_0=0.06$, $\sigma_{prior}= 0.06$, and $\sigma_{res}= 10$. For the deterministic neural network, we used a stochastic gradient descent optimizer with $lr$=0.01.    
\end{paragraph}

\begin{paragraph}{Benchmark between MESU and BGD}
The MESU and BGD models were trained on permuted MNIST and the input images were padded to a dimension of 32x32 and then scaled between 0 and 1. We used a fully connected neural network with two hidden layers of 200 neurons, and used ReLU activation function. The batch size was 128. The mean values of the Bayesian neural networks were initialized using Kaiming uniform initialization. We used the following hyperparameters, $\sigma_0=0.04$, $\sigma_{prior}= 0.04$, and $\sigma_{res}= 10$.    
\end{paragraph}

\end{document}